\def\BibTeX{{\rm B\kern-.05em{\sc i\kern-.025em b}\kern-.08em
    T\kern-.1667em\lower.7ex\hbox{E}\kern-.125emX}}
\newtheorem{lemma}{Lemma}[section]
\begin{document}

\fancypagestyle{firststyle}
{
    \fancyhf{}
    \fancyhead[C]{\small This work has been submitted to the IEEE for possible publication. Copyright may be transferred without notice, after which this version may no longer be accessible.}
    \renewcommand{\headrulewidth}{0pt}
}

\title{Critically-Damped Third-Order Langevin Dynamics\\

}

\author{

\IEEEauthorblockN{Benjamin Sterling}
\IEEEauthorblockA{\textit{Department of Applied Math \& Statistics} \\
\textit{Stony Brook University} \\
Stony Brook, NY, USA \\
\texttt{benjamin.sterling@stonybrook.edu}}

\and

\IEEEauthorblockN{M\'{o}nica F. Bugallo}
\IEEEauthorblockA{\textit{Department of Electrical and Computer Engineering} \\
\textit{Stony Brook University} \\
Stony Brook, NY, USA \\
\texttt{monica.bugallo@stonybrook.edu}}
}

\maketitle
\thispagestyle{firststyle}

\begin{abstract}

While systems analysis has been studied for decades in the context of control theory, it has only been recently used to improve the convergence of Denoising Diffusion Probabilistic Models. This work describes a novel improvement to Third-Order Langevin Dynamics (TOLD), a recent diffusion method that performs better than its predecessors. This improvement, abbreviated TOLD++, is carried out by critically damping the TOLD forward transition matrix similarly to Dockhorn's Critically-Damped Langevin Dynamics (CLD). Specifically, it exploits eigen-analysis of the forward transition matrix to derive the optimal set of dynamics under the original TOLD scheme. TOLD++ is theoretically guaranteed to converge faster than TOLD, and its faster convergence is verified on the Swiss Roll toy dataset and CIFAR-10 dataset according to the FID metric.

\end{abstract}

\begin{IEEEkeywords}
TOLD, CLD, DDPMs, Langevin Dynamics, Critical Damping
\end{IEEEkeywords}

\section{Introduction}

A fundamental task in probabilistic machine learning is finding a method of transporting one probability distribution to another. The reason for doing so is that the distribution of real world data can be difficult to sample from, a task critical to Generative AI. In order to generate samples from a data distribution of interest, it is much easier to sample from a tractable distribution, and transfer those samples into the target data distribution. This is the strategy taken by Diffusion \cite{sohl2015deep} and Normalizing Flows \cite{normalizingflowsoriginal}, developed around the same time. Using these approaches, the latent distribution is taken to be the normal distribution because this choice regularizes the space, as the distance between different images is Euclidean.

More recently, the use of Denoising Diffusion Probabalistic Models (DDPMs) \cite{diffusiondenoising} and Normalizing Flows have exploded due to newfound applications and superiority over Generative Adversarial Networks (GANs). Many recent works \cite{jko, guth2022wavelet, subspacediffusion} seek to improve the data distribution's convergence to an equilibrium normal distribution because the faster the process converges, the closer the latent space is to being truly normally distributed. Continuous Diffusion Models \cite{diffusioncts} are typically modelled with Ornstein-Uhlenbeck Stochastic Processes. Several works add Hamiltonian Dynamics onto Ornstein-Uhlenbeck processes \cite{dockhorn2021score, doucet2022scorebased} to improve latent convergence by smoothing diffusion trajectories, as the noise is not directly added to the data variable. Third-Order Langevin Dynamics (TOLD) \cite{hold}, one such methodology that incorporates a Langevin Diffusion process, has already found use in achieving state-of-the-art results in tasks of voice generation \cite{shi2024langwave} and noisy image restoration \cite{shi2024noisy}. 

The main contribution of this paper is TOLD++, a new methodology to perform TOLD that exhibits universally faster convergence along diffusion time at no extra computational cost. We do so by considering the forward dynamics of the diffusion process as a classical system. The goal is to find parameter values that cause the transition matrix to become critically damped, i.e., possess a single eigen value with geometric multiplicity of 1. In the case of second-order dynamics, critical damping provides the best trade-off between convergence speed and undesirable oscillatory behavior. While this result is well established from classical signal processing for second order systems, it does not apply as obviously for third order systems parameterized like TOLD is. The novelty of this method is clearly demonstrated by faster convergences on both synthetic and real data in Figure \ref{fig:convergence} and Tables \ref{tab:fid_comparison} and \ref{tab:fid_comparison_cifar}. This paper is structured as follows: Section \ref{sec:problemformulation} reviews the TOLD framework, Section \ref{sec:methodology} describes the improvement built into TOLD++, Section \ref{sec:results} details the superior performance of TOLD++, and Section \ref{sec:conclusion} is the conclusion.

\section{Problem Formulation}
\label{sec:problemformulation}

The goal of DDPMs is to draw samples from some intractable data distribution, often high-dimensional, and find forward and backward processes that transition the samples to and from a normal distribution, respectively. There are many uses, a popular application being conditional image generation \cite{ho2022classifierfreediffusionguidance}. TOLD is an example of adding Hamiltonian Dynamics onto an Ornstein-Uhlenbeck process. The TOLD forward stochastic differential equations (SDEs) are the following:

\begin{equation}
\begin{aligned}
\begin{cases}
d\mathbf{q}_t &= \mathbf{p}_t \, dt, \\
d\mathbf{p}_t &= \big(-\mathbf{q}_t + \gamma \mathbf{s}_t \big) \, dt, \\
d\mathbf{s}_t &= \big(-\gamma \mathbf{p}_t - \xi \mathbf{s}_t \big) \, dt + \sqrt{2\xi L^{-1}} \, d\mathbf{w}.
\end{cases}
\end{aligned}
\label{eq:forwardsdes}
\end{equation}

where $\mathbf{q_t}$ is the data variable, $\mathbf{p_t}$ and $\mathbf{s_t}$ are auxilliary variables that represent momentum and acceleration respectively, $\mathbf{w}$ is a standard multidimensional Wiener process, $L$ is the Lipschitz constant of the potential function $U(\mathbf{q_t})$, in this work: $U(\mathbf{q_t}) = \frac{L}{2}|| \mathbf{q_t} ||^2$, $\gamma$ is a friction coefficient, and $\xi$ is an algorithmic parameter. The reverse SDEs are given by \cite{anderson1982reverse}:

\begin{equation}
\begin{cases}
d\mathbf{q}_t = -\mathbf{p}_t dt, \\
d\mathbf{p}_t = \left(\mathbf{q}_t - \gamma \mathbf{s}_t \right) dt, \\
d\mathbf{s}_t = \left(\gamma \mathbf{p}_t + \xi \mathbf{s}_t + 2\xi L^{-1} \nabla_{\mathbf{s}}\log p_{T-t}(\mathbf{x}) \right) dt + \sqrt{2\xi L^{-1}} d\mathbf{\Bar{w}}.
\end{cases}
\end{equation}

\noindent where $\mathbf{x_t} = (\mathbf{q_t}, \mathbf{p_t}, \mathbf{s_t})^T$ and $\nabla_{\mathbf{s_t}}\log p_{T-t}(\mathbf{x_t})$ is the score function. To simplify the notation, define the matrices $\mathbf{F}$ and $\mathbf{G}$ as:

\begin{equation}
    \mathbf{F} = \begin{bmatrix}
0 & 1 & 0 \\
-1 & 0 & \gamma \\
0 & -\gamma & -\xi
\end{bmatrix}, \mathbf{G} = \begin{bmatrix}
0 & 0 & 0 \\
0 & 0 & 0 \\
0 & 0 & \sqrt{2\xi L^{-1}}
\end{bmatrix}.
\end{equation}

Then the entire set of forward equations may simplify to:

\begin{equation}
    d\mathbf{x}_t = \mathbf{F}_k\mathbf{x}_tdt + \mathbf{G}_kd\mathbf{w}
    \label{eq:matrixforward},
\end{equation}

\noindent where $\mathbf{F}_k = \mathbf{F} \otimes \mathbf{I}_d, \mathbf{G}_k = \mathbf{G} \otimes \mathbf{I}_d$ and $d$ is the dimension of $\mathbf{q}_t$. If optimal $\gamma, \xi$ exist, with respect to convergence speed, and if it is possible to critically damp this system so that $\mathbf{F}$ has a single, simple eigen value, what choices are optimal?

\section{Methodology}
\label{sec:methodology}

The speed of the SDE's convergence is completely defined by the eigenvalues of $\mathbf{F}$, specifically the largest one, assuming that they are all real-valued. It can be verified by taking the characteristic polynomial of $\mathbf{F}$, that a repeated eigenvalue of $-\sqrt{3}$ (with geometric multiplicity of 1) is achieved if $\gamma = 2\sqrt{2}$ and $\xi = 3\sqrt{3}$. Thus $\mathbf{F}$ and $\mathbf{G}$ become:

\begin{equation}
    \mathbf{F} = \left[\begin{smallmatrix}
    0 & 1 & 0 \\
    -1 & 0 & 2\sqrt{2} \\
    0 & -2\sqrt{2} & -3\sqrt{3}
    \end{smallmatrix}\right], \quad
    \mathbf{G} = \left[\begin{smallmatrix}
    0 & 0 & 0 \\
    0 & 0 & 0 \\
    0 & 0 & 3^{1/4}\sqrt{6 L^{-1}}
    \end{smallmatrix}\right].
\end{equation}

This leads to faster convergence than TOLD's parameter choice of $\gamma = \sqrt{10}, \xi = 6$, that results in a largest eigenvalue of $-1$. It is proven in Lemma \ref{lem:optimal_eigenvalue} that a largest eigenvalue of $-\sqrt{3}$ cannot be further minimized with different choices of $\gamma$ and $\xi$.

\begin{lemma}
\label{lem:optimal_eigenvalue}
Suppose $\lambda_1, \lambda_2, \lambda_3 \in \mathbb{R}$ satisfy $p_F(\lambda) = 0$ and $\lambda_1 \leq \lambda_2 \leq \lambda_3$. It follows that $\min(\lambda_3) = -\sqrt{3}.$
\end{lemma}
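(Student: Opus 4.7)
\emph{Plan.} The plan is to reduce the question to an elementary algebraic inequality by first applying Vieta's formulas. Computing the characteristic polynomial of $\mathbf{F}$ directly gives
\begin{equation*}
p_F(\lambda) = \lambda^3 + \xi\lambda^2 + (\gamma^2+1)\lambda + \xi.
\end{equation*}
The key observation is that the $\lambda^2$ coefficient equals the constant term, which by Vieta's formulas forces $\lambda_1 + \lambda_2 + \lambda_3 = \lambda_1 \lambda_2 \lambda_3 = -\xi$. The second symmetric function $\lambda_1\lambda_2+\lambda_1\lambda_3+\lambda_2\lambda_3 = \gamma^2+1$ is also available, though the core argument turns out to use only the sum-equals-product identity.

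Since $\lambda_3 \geq 0$ already gives $\lambda_3 \geq -\sqrt{3}$, I restrict to $\lambda_3 < 0$, whence $\lambda_1 \leq \lambda_2 \leq \lambda_3 < 0$. Substituting $\mu_i := -\lambda_i > 0$, one has $\mu_1 \geq \mu_2 \geq \mu_3 > 0$ and $\mu_1 + \mu_2 + \mu_3 = \mu_1 \mu_2 \mu_3$. The goal reduces to showing $\mu_3 \leq \sqrt{3}$.

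For the core step, I introduce $s = \mu_1 + \mu_2$ and $p = \mu_1 \mu_2$, so the identity becomes $p = 1 + s/\mu_3$. The ordering $\mu_1, \mu_2 \geq \mu_3$ yields two inequalities: (i) $s \geq 2\mu_3$, and (ii) $(\mu_1 - \mu_3)(\mu_2 - \mu_3) \geq 0$, i.e., $p - s\mu_3 + \mu_3^2 \geq 0$. Substituting the expression for $p$ into (ii) and rearranging for the nontrivial case $\mu_3 > 1$ yields an upper bound $s \leq \mu_3(1+\mu_3^2)/(\mu_3^2-1)$; chaining this against (i) simplifies to $\mu_3^2 \leq 3$. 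The complementary case $\mu_3 \leq 1 < \sqrt{3}$ is immediate. Attainment of $\lambda_3 = -\sqrt{3}$ is then witnessed by the parameter choice $\gamma = 2\sqrt{2}$, $\xi = 3\sqrt{3}$ exhibited in the main text, under which $p_F(\lambda) = (\lambda + \sqrt{3})^3$.

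The main obstacle is the core algebraic step. Although each inequality is elementary on its own, tightness hinges on recognizing that at the optimum both (i) and (ii) hold with equality simultaneously, forcing $\mu_1 = \mu_2 = \mu_3 = \sqrt{3}$ and collapsing the feasible set to a single point; the small case split at $\mu_3 = 1$ is a minor technicality but should be handled explicitly. Everything else is bookkeeping around Vieta's relations.
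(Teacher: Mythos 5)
Your proof is correct, and it takes a genuinely different route from the paper's, although both pivot on the same Vieta observation that the $\lambda^2$ and constant coefficients of $p_F$ coincide, forcing $\lambda_1+\lambda_2+\lambda_3=\lambda_1\lambda_2\lambda_3$. The paper argues by contradiction: assuming $\lambda_3<-\sqrt{3}$, it writes $\lambda_3=-\sqrt{3}-\epsilon_3$, $\lambda_2=-\sqrt{3}-\epsilon_2$, expands $\lambda_2\lambda_3>3$ to get $\lambda_1\lambda_2\lambda_3<3\lambda_1$ (since $\lambda_1<0$), and then the identity collapses to $\lambda_1>(\lambda_2+\lambda_3)/2$, which is incompatible with the ordering. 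You instead pass to $\mu_i=-\lambda_i$, reduce to a direct upper bound $\mu_3\le\sqrt{3}$, and chain two ordering inequalities --- $\mu_1+\mu_2\ge 2\mu_3$ and $(\mu_1-\mu_3)(\mu_2-\mu_3)\ge 0$ --- through the constraint $p=1+s/\mu_3$. What your version buys: it is a constructive bound rather than a reductio, it makes the tightness structure explicit (both inequalities saturate iff $\mu_1=\mu_2=\mu_3=\sqrt{3}$, matching the triple root), and it cleanly handles the case $\lambda_3\ge 0$ and the attainment via $(\lambda+\sqrt{3})^3$ --- the paper leaves achievability to the surrounding text and is slightly terse about why the ordering is violated. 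What the paper's version buys: fewer auxiliary symbols and a shorter chain of inequalities, at the cost of requiring the reader to expand $\lambda_2\lambda_3$ and track signs through the $\epsilon$-parameterization. One minor point worth noting in your write-up: the case split at $\mu_3=1$ deserves the explicit handling you flag, since the sign of $1-\mu_3^2$ governs the direction of the inequality when you solve for $s$; you identified this correctly.
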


\begin{proof}
The characteristic polynomial of $\mathbf{F}$ is:
\[p_F(\lambda) = \lambda^3 + \xi\lambda^2 + (\gamma^2 + 1)\lambda + \xi = (\lambda-\lambda_1)(\lambda-\lambda_2)(\lambda-\lambda_3).\]

Observe that the $\lambda^2$ and constant coefficients of $p_F$ are both $\xi$, and therefore equivalent. After expanding, this implies that the constant term $-\lambda_1\lambda_2\lambda_3$ is equal to the quadratic term $-(\lambda_1 + \lambda_2 + \lambda_3)$, thus: $\lambda_1\lambda_2\lambda_3 = \lambda_1 + \lambda_2 + \lambda_3$. We now proceed by showing that if $\lambda_3 < -\sqrt{3}$ there is a contradiction. Suppose $\epsilon_2, \epsilon_3 \in \mathbb{R}^+$ where $0 < \epsilon_3 < \epsilon_2$ such that $\lambda_3 = -\sqrt{3} - \epsilon_3$ and $\lambda_2 = -\sqrt{3} - \epsilon_2$. Thus we can rewrite $\lambda_1$ as

\[\lambda_1 = -(\lambda_2 + \lambda_3) + \lambda_1\lambda_2\lambda_3\]
\[\lambda_1 = -(\lambda_2 + \lambda_3) + 3\lambda_1 - \left( (\epsilon_2 + \epsilon_3)\sqrt{3} + \epsilon_2\epsilon_3\right)(-\lambda_1)\]

$\lambda_1 < 0$ because $\lambda_3 < -\sqrt{3}$ by supposition. Thus:

\[\lambda_1 < -(\lambda_2 + \lambda_3) + 3\lambda_1 \to \lambda_1 > \frac{\lambda_2 + \lambda_3}{2}\]

which violates $\lambda_1 \leq \lambda_2 \leq \lambda_3$. This is a contradiction.

\end{proof}

The remaining problem is that the original derivation of TOLD \cite{hold} utilizes Putzer's Lemma to derive the score matching algorithm \cite{putzer}, that does not apply to non-diagonalizable matrices. However, we use the following identity in the case of a repeated eigenvalue ($\lambda = -\sqrt{3}$):
\begin{equation}
    \exp(\mathbf{F}t) = \exp(\lambda t)\bigg[\mathbf{I} + t(\mathbf{F} - \lambda \mathbf{I}) + \frac{t^2}{2}(\mathbf{F} - \lambda \mathbf{I})^2 \bigg]
    \label{eq:nilpotent}
\end{equation}
This is easy to show by recognizing that $(\mathbf{F} - \lambda \mathbf{I})$ is nilpotent and recalling the Taylor series definition used in matrix functional calculus. The rest of the derivation proceeds by utilizing the Fokker-Planck Equations and stochastic calculus \cite{sarkka2019applied} applied to (\ref{eq:matrixforward}) to derive the forward distribution, $q(\mathbf{x_t}|\mathbf{x_0}) = \mathcal{N}(\boldsymbol{\mu}_t, \mathbf{\Sigma}_t)$:
\begin{equation}
    \frac{d \boldsymbol{\mu}_t}{dt} = \mathbf{F}_k\boldsymbol{\mu}_t
    \label{eq:mean_eq}
\end{equation}
\begin{equation}
    \frac{d \mathbf{\Sigma}_t}{dt} = \mathbf{F}_k\mathbf{\Sigma}_t + (\mathbf{F}_k\mathbf{\Sigma}_t)^T + \mathbf{G}_k\mathbf{G}_k^T.
    \label{eq:var_eq}
\end{equation}

Identically to ordinary differential equations, the solution to (\ref{eq:mean_eq}) is $\boldsymbol{\mu}_t = \left( \exp(\mathbf{F}t) \otimes \mathbf{I}_d \right) \mathbf{x}_0$, or using (\ref{eq:nilpotent}):
\begin{equation}
    \boldsymbol{\mu}_t= e^{-\sqrt{3} t} \left(\bigg[\mathbf{I} + t(\mathbf{F} + \sqrt{3} \mathbf{I}) + \frac{t^2}{2}(\mathbf{F} + \sqrt{3} \mathbf{I})^2 \bigg]\otimes \mathbf{I}_d \right)\mathbf{x}_0.
\end{equation}
Define:
\begin{equation}
    \exp(\mathbf{F}t) = \begin{bmatrix}
f_{11}(t) & f_{12}(t) & f_{13}(t) \\
f_{21}(t) & f_{22}(t) & f_{23}(t) \\
f_{31}(t) & f_{32}(t) & f_{33}(t)
\end{bmatrix} \nonumber \\ 
\end{equation}
\begin{equation}
= e^{-t\sqrt{3}} \begin{bmatrix}
t^2 + t\sqrt{3} + 1 & t^2\sqrt{3} + t & t^2\sqrt{2} \\
-t^2\sqrt{3} - t & -3t^2 + t\sqrt{3} + 1 & -t^2\sqrt{6} + 2t\sqrt{2} \\
t^2\sqrt{2} & t^2\sqrt{6} - 2t\sqrt{2} & 2t^2-2t\sqrt{3} + 1
\end{bmatrix}. \nonumber
\end{equation}

It follows that

\begin{equation}
    \boldsymbol{\mu}_t = \begin{bmatrix}
f_{11}(t) \mathbf{q_0} + f_{12}(t) \mathbf{p_0} + f_{13}(t) \mathbf{s_0} \\
f_{21}(t) \mathbf{q_0} + f_{22}(t) \mathbf{p_0} + f_{23}(t) \mathbf{s_0} \\
f_{31}(t) \mathbf{q_0} + f_{32}(t) \mathbf{p_0} + f_{33}(t) \mathbf{s_0}
\label{eq:meanupdate}
\end{bmatrix}.
\end{equation}

Following the steps in \cite{hold} and (\ref{eq:var_eq}), it is assumed $\mathbf{\Sigma}_0 = \mathrm{diag}(\Sigma_0^{qq}, \Sigma_0^{pp}, \Sigma_0^{ss}) \otimes \mathbf{I}_d$, where $\Sigma_0^{qq}, \Sigma_0^{pp}, \Sigma_0^{ss}$ are initial covariances; in this work they are taken to be $\alpha L^{-1}$. $\mathbf{\Sigma}_t$ is solved as follows:
\[\mathbf{\Sigma}_t = \exp(\mathbf{F}_kt) \mathbf{\Sigma}_0\exp(\mathbf{F}_kt)^T + \int_0^t \exp(\mathbf{F}_ks)\mathbf{G}_k\mathbf{G}_k^T \exp(\mathbf{F}_ks)^T ds.\]

This update simplifies significantly, identically to the appendix in \cite{hold}:

%\begin{scriptsize}
\begin{align}
    \scriptsize
    \Sigma_t^{qq} &= \sum_{j=1}^3 f^2_{1j}(t)\Sigma_0^{jj} + 6\sqrt{3}L^{-1}\int_0^t f_{13}^2(s)ds, \nonumber \\
    \Sigma_t^{qp} &= \sum_{j=1}^3 f_{1j}(t)f_{2j}(t)\Sigma_0^{jj} + 6\sqrt{3}L^{-1}\int_0^t f_{13}(s)f_{23}(s)ds, \nonumber \\
    \Sigma_t^{qs} &= \sum_{j=1}^3 f_{1j}(t)f_{3j}(t)\Sigma_0^{jj} + 6\sqrt{3}L^{-1}\int_0^t f_{13}(s)f_{33}(s)ds, \nonumber \\
    \Sigma_t^{pp} &= \sum_{j=1}^3 f^2_{2j}(t)\Sigma_0^{jj} + 6\sqrt{3}L^{-1}\int_0^t f_{23}^2(s)ds, \nonumber \\
    \Sigma_t^{ps} &= \sum_{j=1}^3 f_{2j}(t)f_{3j}(t)\Sigma_0^{jj} + 6\sqrt{3}L^{-1}\int_0^t f_{23}(s)f_{33}(s)ds, \nonumber \\
    \Sigma_t^{ss} &= \sum_{j=1}^3 f^2_{3j}(t)\Sigma_0^{jj} + 6\sqrt{3}L^{-1}\int_0^t f_{33}^2(s)ds,
    \label{eq:covarianceupdate}
\end{align}

\begin{equation}
\mathbf{\Sigma}_t = \begin{bmatrix}
\Sigma_t^{qq} & \Sigma_t^{qp} & \Sigma_t^{qs} \\
\Sigma_t^{qp} & \Sigma_t^{pp} & \Sigma_t^{ps}  \\
\Sigma_t^{qs} & \Sigma_t^{ps} & \Sigma_t^{ss} 
\end{bmatrix} \otimes \mathbf{I}_d, \nonumber
\end{equation}

\noindent where $\Sigma^{11}_0 = \Sigma^{qq}_0, \Sigma^{22}_0 = \Sigma^{pp}_0, \Sigma^{33}_0 = \Sigma^{ss}_0$. The integrals are all analytically solvable because each $f$ involves a polynomial of $t$ multiplied by an exponential of a constant times $t$. Just like the original TOLD, it can be proven that:

\[\lim_{t \to \infty} \Sigma_t^{qq}=\lim_{t \to \infty} \Sigma_t^{pp}=\lim_{t \to \infty} \Sigma_t^{ss} = L^{-1}.\]
\[\lim_{t \to \infty} \Sigma_t^{qp}=\lim_{t \to \infty} \Sigma_t^{qs}=\lim_{t \to \infty} \Sigma_t^{ps} = 0.\]
\[\lim_{t \to \infty}\boldsymbol{\mu}_t = \mathbf{0}.\]

\begin{algorithm}
\caption{TOLD/TOLD++ Training Algorithm}\label{alg:TOLD}
\begin{algorithmic}[1]
    \Require Input data $\mathbf{q}_0$, $\Sigma_0 \in \mathbb{R}^{3\times3}$, Score Network $\mathfrak{S}$
    \For{$k = 1$ to $n_{train}$}
        \State $\mathbf{p}_0 \gets \mathcal{N}(\mathbf{0}, \frac{\alpha}{L} \mathbf{I}_d), \mathbf{s}_0 \gets \mathcal{N}(\mathbf{0}, \frac{\alpha}{L} \mathbf{I}_d)$    
        \State $t \gets \mathcal{U}(0, T)$
        \State Calculate $f_{ij}(t)$ for $1 \leq i, j \leq 3$
        \State Perform Equations (\ref{eq:meanupdate}) and (\ref{eq:covarianceupdate})
        \State Take Cholesky Decomposition, $L_t$, of $\Sigma_t = $ \hfill 
$\begin{bmatrix}
\Sigma_t^{qq} & \Sigma_t^{qp} & \Sigma_t^{qs} \\
\Sigma_t^{qp} & \Sigma_t^{pp} & \Sigma_t^{ps} \\
\Sigma_t^{qs} & \Sigma_t^{ps} & \Sigma_t^{ss}
\end{bmatrix}$
        \State $\boldsymbol{\epsilon_1}, \boldsymbol{\epsilon_2}, \boldsymbol{\epsilon_3} \sim \mathcal{N}(\mathbf{0}, \mathbf{I}_d)$
        \vspace*{0.5em}
        \State $\mathbf{z}_t = \begin{bmatrix}
\mathbf{q}_t \\
\mathbf{p}_t \\
\mathbf{s}_t
\end{bmatrix} \gets \boldsymbol{\mu}_t + \begin{bmatrix}
L_t^{qq} \boldsymbol{\epsilon_1} \\
L_t^{pq}\boldsymbol{\epsilon_1} + L_t^{pp}\boldsymbol{\epsilon_2} \\
L_t^{sq}\boldsymbol{\epsilon_1} + L_t^{sp}\boldsymbol{\epsilon_2} + L_t^{ss}\boldsymbol{\epsilon_3}
\end{bmatrix}$
    \vspace*{0.5em}
\State $l_t \gets \Bigg( \Sigma_t^{ss} - \frac{(\Sigma_t^{sq})^2}{\Sigma_t^{qq}} - \bigg( \Sigma_t^{pp} - \frac{(\Sigma_t^{pq})^2}{\Sigma_t^{qq}} \bigg)^{- 1} \bigg( \Sigma_t^{sp} - \frac{\Sigma_t^{sq}\Sigma_t^{pq}}{\Sigma_t^{qq}}\bigg)^2 \Bigg)^{-1/2}$
    \vspace*{0.5em}
    
    \State $\mathbf{s}_{\theta} \gets \mathfrak{S}(\mathbf{z}_t, t)$ \Comment{$\theta$ are score network parameters}

    \vspace*{0.5em}
    \State $\mathcal{L} \gets ||\boldsymbol{\epsilon}_3 + \frac{\mathbf{s}_{\theta}}{l_t}||^2$
    \State \textbf{Backpropagate} through $\mathcal{L}$  
    \EndFor
\end{algorithmic}
\end{algorithm}

TOLD and TOLD++ are structurally equivalent as it concerns Algorithm \ref{alg:TOLD}, except the difference between them is how the entries of the matrix $\exp(\mathbf{F}t)$, and $\Sigma_t$ are calculated. In addition to the performance benefit that TOLD++ provides, the method is also computationally cheaper. In TOLD, three exponentiations are required: $\exp(-t), \exp(-2t), \exp(-3t)$, whereas TOLD++ only requires computation of $\exp(-t\sqrt{3}), t^4, t^3, t^2$. Furthermore, the limits taken above indicate that both methods asymptotically reach the same latent distributions. This all means TOLD++ can do even more than TOLD with less computations, and still produce the same asymptotic results.

\section{Experiments and Results}
\label{sec:results}

This section provides several experiments, comparisons, and results that demonstrate the superior performance of TOLD++. The first experiment in Figure \ref{fig:convergence} was performed on a Gaussian Mixture Model to visualize the purpose of TOLD++. This figure simulates the forward SDEs in (\ref{eq:forwardsdes}) run with $50$ diffusion steps ($\Delta t = 1/50$), $T_{\text{final}} = 1$, $L = 1$, and with $1024$ samples from the distribution with pdf:
\[\pi(x) = 0.2 \cdot \mathcal{N}(x|0,0.5) + 0.4  \cdot \mathcal{N}(x|5, 1) + 0.4  \cdot \mathcal{N}(x|-5,1).\]

It is noticeable that the TOLD++ forward equations converge to a normal distribution faster than that of TOLD. Thus, they are a more efficient set of dynamics, visually validating the theoretical results.

To objectively validate the performance improvement, TOLD++ and TOLD are compared to each other on the Swiss Roll toy dataset, and the CIFAR-10 \cite{cifar10} dataset using the Frechet Inception Distance (FID) metric \cite{FIDmetric}. Since both methods converge to the same latent distribution, the performances are compared after an equal number of training iterations are completed. Two NVIDIA Tesla P100 GPUs were used for computation; a lack of newer GPUs resulted in challenges achieving results near the state-of-the-art.

\begin{figure}[ht]
    \centering
    \begin{tikzpicture}
        % First image and its annotations
        \node at (0, 0) {\includegraphics[width=4cm]{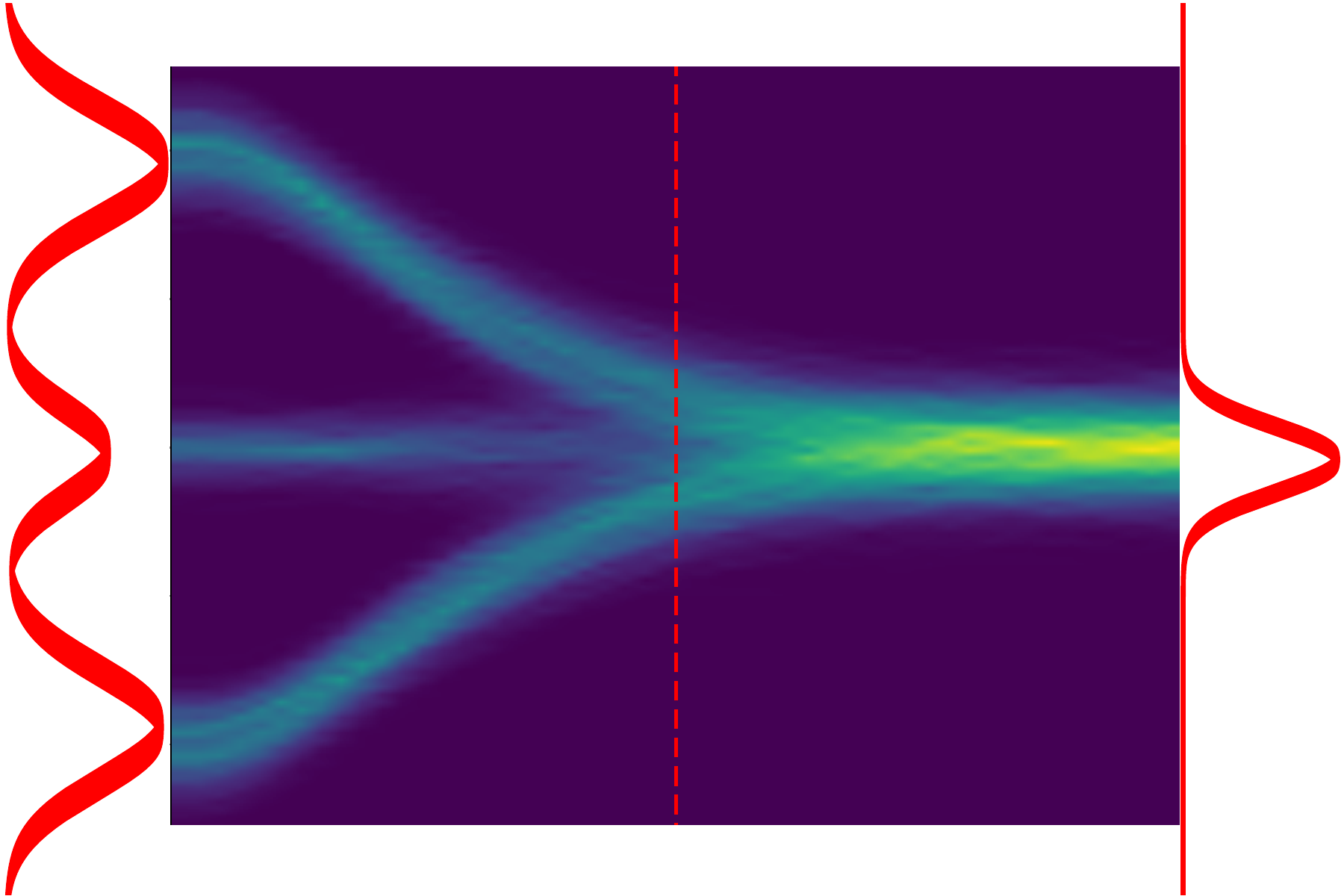}};
        
        % Overhead label for the first image
        \node[above, font=\footnotesize] at (0, 1.2) {TOLD};

        % Annotate the leftmost label "p(x_0)"
        \node[below, font=\footnotesize] at (-1.7, -1.2) {$p(x_0)$};

        % Annotate the rightmost label "p(x_T)"
        \node[below, font=\footnotesize] at (1.7, -1.2) {$p(x_T)$};

        % Label the x-axis with "Diffusion time t"
        \node[below, font=\footnotesize] at (0, -1.3) {$t \to$};

        % Second image and its annotations
        \node at (4.5, 0) {\includegraphics[width=4cm]{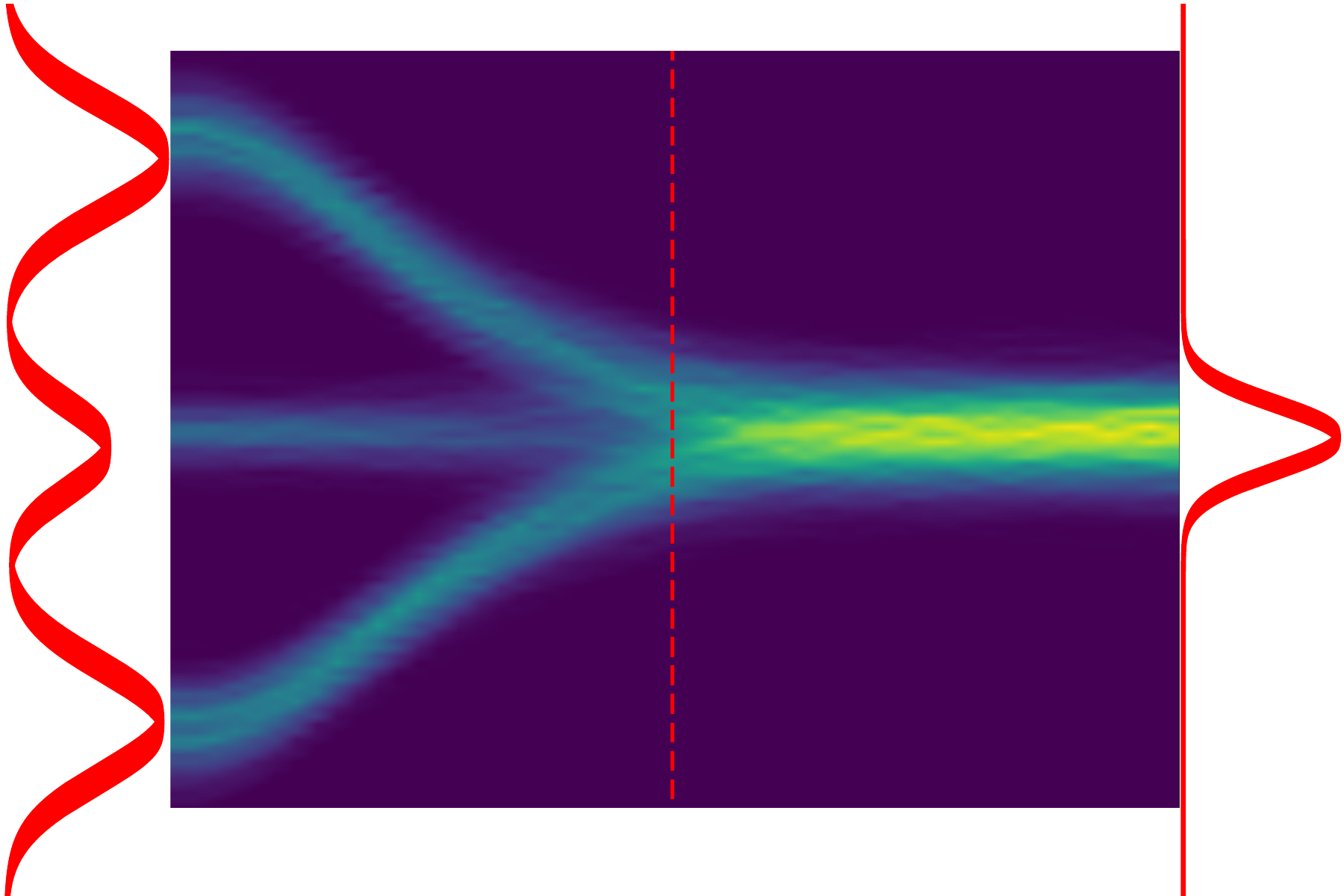}};
        
        % Overhead label for the second image
        \node[above, font=\footnotesize] at (4.5, 1.2) {TOLD++};

        % Annotate the leftmost label "p(x_0)"
        \node[below, font=\footnotesize] at (2.8, -1.2) {$p(x_0)$};

        % Annotate the rightmost label "p(x_T)"
        \node[below, font=\footnotesize] at (6.2, -1.2) {$p(x_T)$};

        % Label the x-axis with "Diffusion time t"
        \node[below, font=\footnotesize] at (4.5, -1.3) {$t \to$};

    \end{tikzpicture}
    \caption{Experiment on TOLD vs TOLD++ densities on samples from a Gaussian Mixture Model. The lighter the color, the more intense the density. The vertical dashed line occurs at the same diffusion time, and displays that TOLD++ converges faster. }
    \label{fig:convergence}
\end{figure}

\subsection{Swiss Roll dataset}

Table \ref{tab:fid_comparison} contains results for the Swiss Roll dataset. The experiments used a batch size of $2^{20} = $1,048,576 samples, and a fully connected Multi-Layer Perceptron (MLP) containing 7 input features (2 for position, velocity, and acceleration, each, and one additional for time), 7 hidden layers with 128 features per layer, and a 2-dimensional output layer; the SILU activation function was used for all nonlinearities. The parameters $L=4, T=1, \alpha=0.1$ were used along with a learning rate of $5\times10^{-3}$. For each row in Table \ref{tab:fid_comparison}, the network was trained from scratch 10 times, and for each network, 10 batches of samples (of size $2^{20}$) were taken. This produces a total of $100$ FID measurements per entry, of which the mean and standard deviation are reported. TOLD++ scores a statistically significant lower FID for every number of training steps, according to a Two-Sample t-Test, suggesting that it converges objectively faster than TOLD.

\subsection{CIFAR-10 dataset}

On the CIFAR-10 dataset, Table \ref{tab:fid_comparison_cifar} demonstrates that TOLD++ converges faster than TOLD for each number of training iterations. A total of 64 image samples of TOLD++ at 790,000 training iterations are given in Figure \ref{fig:cifar}. A Noise Conditional Score Network++ (NCSN++) network was used for score matching that contained 4 BigGAN type residue blocks, and a DDPM attention module with a resolution of 16 and training batch size of 64; we opted to use the same network that \cite{hold} did. The FIDs are calculated using the EM Method with 200 discrete steps and 10,000 samples, with an evaluation batch size of $16$. Due to computational constraints, there were challenges achieving the same FIDs \cite{hold} did, but nonetheless these methods are compared with identical networks and hyperparameters, illustrating the clear benefit of TOLD++.

\begin{figure}[!t]
    \centering
    \includegraphics[width=\linewidth]{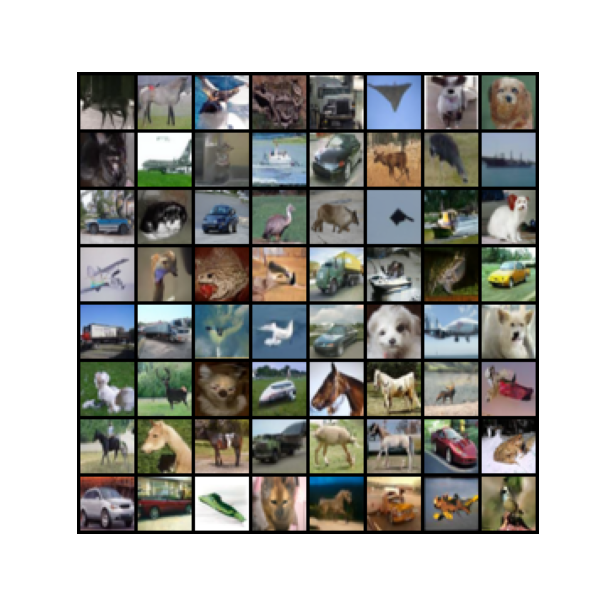}  % Include your image here
    \caption{Generated CIFAR-10 samples at 790,000 training iterations, without cherry picking.}
    \label{fig:cifar}
\end{figure}

\begin{table}[t!]
\centering
\caption{FID ($\downarrow$) Scores on Swiss Roll Dataset. Means reported, and standard deviations in parenthesis}
\begin{tabular}{ccc}
\toprule
\textbf{Training Iterations} & \textbf{TOLD} & \textbf{TOLD++} \\
\midrule
1000 & 2.681 (0.0337) & \textbf{2.566 (0.0541)} \\
2000 & 2.627 (0.0199) &  \textbf{2.439 (0.0201)} \\
3000 & 2.606 (0.0220) & \textbf{2.392 (0.0102)} \\
4000 & 2.584 (0.0102) & \textbf{2.381 (0.0145)} \\
5000 & 2.583 (0.0103) & \textbf{2.375 (0.0126)} \\
\bottomrule
\end{tabular}
\label{tab:fid_comparison}
\end{table}

\begin{table}[t!]
\centering
\caption{FID ($\downarrow$) Scores on CIFAR-10 Dataset.}
\begin{tabular}{ccc}
\toprule
\textbf{Training Iterations} & \textbf{TOLD} & \textbf{TOLD++} \\
\midrule
$3 \times 10^5$ & 8.232 & \textbf{7.518} \\
$4 \times 10^5$ & 7.900 &  \textbf{7.180}\\
$5 \times 10^5$ & 7.648 &  \textbf{6.818}\\
$6 \times 10^5$ & 7.520 &  \textbf{6.704}\\
$7 \times 10^5$ & 7.475 &  \textbf{6.498}\\
\bottomrule
\end{tabular}
\label{tab:fid_comparison_cifar}
\end{table}

\section{Conclusion}
\label{sec:conclusion}

This paper introduced TOLD++, a framework in which the optimal set of dynamics were derived under the original TOLD diffusion method. We provide theoretical guarantees that all systems governed by the considered models achieve optimal convergence speed using TOLD++. We demonstrate the validity of TOLD++ over TOLD visually as well as with the FID metric on both synthetic and real data, and showcase its promising performance. Future work on this topic naturally may ask the question, as to whether or not critical damping generalizes to Higher Order Langevin Dynamics of any order. Furthermore, the trade off between adding higher order dynamics, computational cost, and results still remains unexplored.

\bibliographystyle{IEEEtran}
\bibliography{refs}

\end{document}